\newtheorem{theorem}{Theorem}
\newtheorem{remark}{Remark}
\newtheorem{definition}{Definition}[section]
\newtheorem{assumption}{Assumption}
\DeclareMathOperator*{\diag}{diag}
\DeclareMathOperator*{\argmax}{arg\,max}
\begin{document}

\title{ Decentralized Reinforcement Learning for Multi-Agent Multi-Resource Allocation via Dynamic Cluster Agreements }

\author{Antonio Marino, Esteban Restrepo, Claudio Pacchierotti, Paolo Robuffo Giordano
\thanks{Manuscript received: February 23, 2025; Revised: May 6, 2025; Accepted: June 2, 2025.}
\thanks{This paper was recommended for publication by Editor M. Ani Hsieh upon evaluation of the Associate Editor and Reviewers’ comments.}
\thanks{A. Marino is with Univ Rennes, CNRS, Inria, IRISA -- Rennes, France. E-mail: antonio.marino@irisa.fr.}
\thanks{E. Restrepo, C. Pacchierotti and P. Robuffo Giordano are with CNRS, Univ Rennes, Inria, IRISA -- Rennes, France. E-mail: \{esteban.restrepo, claudio.pacchierotti,prg\}@irisa.fr.}
\thanks{This work was supported by the ANR-20-CHIA-0017 project ``MULTISHARED''}
\thanks{Digital Object Identifier (DOI): }
}

\maketitle
\begin{abstract}
This paper addresses the challenge of allocating heterogeneous resources among multiple agents in a decentralized manner. Our proposed method, Liquid-Graph-Time Clustering-IPPO, builds upon Independent Proximal Policy Optimization (IPPO) by integrating dynamic cluster consensus, a mechanism that allows agents to form and adapt local sub-teams based on resource demands. This decentralized coordination strategy reduces reliance on global information and enhances scalability. We evaluate LGTC-IPPO against standard multi-agent reinforcement learning baselines and a centralized expert solution across a range of team sizes and resource distributions. Experimental results demonstrate that LGTC-IPPO achieves more stable rewards, better coordination, and robust performance even as the number of agents or resource types increases. Additionally, we illustrate how dynamic clustering enables agents to reallocate resources efficiently also for scenarios with discharging resources.  
\end{abstract}

\begin{IEEEkeywords}
 Distributed Control, Graph Neural Network, Resource Assignment
\end{IEEEkeywords}

\section{INTRODUCTION}
Resource allocation in multi-agent systems (MAS) is a critical challenge in domains such as robotics, logistics, and disaster management. It requires agents to collaboratively address dynamic, heterogeneous demands under environmental and operational constraints. Complexity arises from partial observability, resource heterogeneity, and the need for decentralized decision-making to ensure scalability and robustness~\cite{khamis2015multi, park2021multi, fu2022robust}. For example, in search and rescue, robots must deliver various resources, such as first aid kits or connectivity, to multiple targets. Centralized methods often become impractical in large-scale, real-world scenarios due to computational and communication limits~\cite{doostmohammadian2025survey}.

Several studies have proposed distributed solutions to the multi-robot task allocation problem. Morgan et al.~\cite{morgan2016swarm} employed a distributed auction algorithm to allocate agents to predefined locations, while Goarin et al.~\cite{goarin2024graph} used graph neural networks (GNNs) to reduce per-agent information requirements. Griffith et al.~\cite{griffith2017automated} formulated dynamic allocation as a Markov Decision Process (MDP), demonstrating scalability but without addressing heterogeneity. More recently, Dai et al.~\cite{dai2025heterogeneous} tackled heterogeneous multi-robot task allocation via a decentralized sequential decision-making framework, using an attention-based neural network and a constrained flashforward mechanism to avoid deadlocks.

Other approaches, such as~\cite{camisa2022multi}, addressed heterogeneous allocation through distributed optimization with locally defined costs, though general resource allocation typically involves non-decomposable global objectives. Coffey et al.~\cite{coffey2023covering} improved this by modeling resource distribution as a coverage task with dynamic densities but faced challenges with local minima and numerical instability. Finally, Zhang et al.~\cite{zhang2024opinion} modeled allocation as an opinion dynamics process, enabling decentralized team formation but requiring resets when environments change, limiting adaptability to dynamic settings.

Dynamic clustering agreement in networked systems has been widely studied in recent years. Given a team of agents in a connected graph, cluster consensus seeks to achieve agreement within subgroups of agents smaller than the overall team. Conditions for cluster consensus, such as specific graph topologies, negative edge weights, or non-linear and heterogeneous interaction protocols, are outlined in~\cite{sorrentino2016complete, xia2011clustering}. For instance,~\cite{zhang2024opinion} uses non-linear synchronization dynamics, while~\cite{bizyaeva2022nonlinear} combines nonlinear saturating dynamics with heterogeneity to induce bifurcations that result in clustering. Yet, developing a general methodology to design such non-linear and heterogeneous dynamics remains an open challenge.

In this paper, we propose to learn opinion dynamics using multi-agent reinforcement learning (MARL). MARL enables agents to learn cooperative behaviors without centralized control, and frameworks like decentralized partially observable Markov decision processes (Dec-POMDPs)~\cite{oliehoek2016concise} provide a formal foundation. However, practical challenges such as non-stationarity, scalability, and credit assignment persist~\cite{gronauer2022multi}.

Many MARL methods adopt the centralized training with decentralized execution (CTDE) paradigm~\cite{foerster2018counterfactual, lowe2017multi}, where a global value function guides individual policies. While effective in some settings, CTDE often relies on a well-defined global state and struggles with credit assignment and local coordination. Methods like VDN~\cite{sunehag2018value} and QMIX~\cite{rashid2020monotonic} decompose the global value but falter under partial observability, missing fine-grained coordination. Counterfactual methods~\cite{foerster2018counterfactual} aim to isolate each agent’s impact but can suffer from high variance and unstable training. Multi-objective MARL~\cite{felten2024momaland} is also relevant for multi-resource allocation, where the challenge lies in balancing trade-offs to distribute agents effectively and meet global resource demands.
To address these challenges, we propose a novel decentralized reinforcement learning framework that leverages cluster consensus, specifically tailored for multi-agent, multi-resource allocation problems. Decentralized training has been shown to reduce bias and, in some cases, lower variance in value function estimation~\cite{lyu2021contrasting}, making it a promising approach for tackling the complexities of such environments. Specifically, our contributions are as follows:
\begin{itemize}
    \item Decentralized Reinforcement Learning (RL) with Dynamic Cluster Consensus: We introduce a consensus cluster value function that leverages shared information among agents to address the credit assignment problem effectively. This approach ensures that agents dynamically form clusters to handle subgroup-specific demands while maintaining global coordination.
    \item Reward Design for Multi-Agent Resource Allocation: We propose a hybrid reward structure that balances global incentives, such as reducing overall resource demand, with local rewards that penalize collisions and reward subgroup cooperation. This design bridges the gap between localized decision-making and system-wide optimization.
    \item Experimental Validation in Simulation and Hardware: We validate our approach through extensive simulations and real-world experiments using drones, demonstrating its robustness and efficiency. The results show that our method outperforms state-of-the-art algorithms, such as VDN, QMIX, and Multi-Objective Multi-Agent PPO (MOMAPPO) [8], in terms of resource allocation performance and adaptability.
\end{itemize}


\section{PROBLEM STATEMENT}
\label{sec:prob-statement}

In this section, we introduce the problem using the Markov game formalism, following the reinforcement learning (RL) literature. Consider a resources-assignment task involving a fully cooperative multi-agent team \( \mathcal{V} \) composed of \( N \) agents operating in a bounded, convex environment \( Q \subset \mathbb{R}^{n} \). The task requires delivering \( r \) different resources of different types \(\kappa\) to \( M \) consumers located in \( Q \), where each consumer may demand a subset of the available resources. Each consumer has an interaction area \( R_m \), within which an agent's resource can be released to satisfy the demand, see Figure~\ref{fig:assingment-example}. We consider two types of resources based on their depletion dynamics: persistent resources, which require agents to remain within the release area indefinitely to sustain delivery, and instantaneous resources, which are fully delivered upon the agent's arrival in the interaction area. These two types represent opposite ends of the depletion spectrum, ranging from extremely slow (continuous presence required) to extremely fast (immediate release). 

The system's state \( s \in \mathcal{S} \) is defined by the agents' positions \( p \in Q \), the consumers' positions \( d \in Q \), and the resources associated with the agents' supplies \( \mathrm{so}_i \in \mathbb{R}^r\) for agent $i$ and the consumers' demands \( \mathrm{de}_m \in \mathbb{R}^r\) for the consumer $m$. The state evolves according to a transition probability \( P(s'|s, u): \mathcal{S} \times \mathcal{U} \times \mathcal{S} \rightarrow [0,1] \), where \( u \in \mathcal{U}^N \) represents the agents' joint actions. Agents have access to partial observations \( z_i \in \mathcal{Z} \), which include only the agent's position \( p_i \), their resources \( \mathrm{so}_{i} \), and all consumer positions \( d \) and their associated resource demands \( \mathrm{de}_{m} \). Additionally, agents are allowed to communicate with their neighbours \( \mathcal{N}_i = \{ v_j \in \mathcal{V} \mid \| p_i - p_j \|_2 \leq C, j \neq i \} \), meaning that the communication of agent $v_i$ is restricted to a sphere of radius \( C \) centered at each agent.

The objective of the task is to find a collaborative control policy that enables the agents to satisfy, at best, the resource demands within the environment. To guarantee the existence of a unique resource allocation solution, we assume that the sum of the available supplies $\mathrm{so}_i, \forall i =1,\dots,N$  is less or equal to the sum of $\mathrm{de}_m, \forall m=1,\dots,M$. Moreover, this choice accounts for long-term mission where the robot resources may not be enough to finish the team tasks. Each agent receives a reward \( \mathrm{rw}_{i}(s, u): \mathcal{S} \times \mathcal{U} \rightarrow \mathbb{R} \) for fulfilling demands. The problem can be formally defined as a Decentralized Partially Observable Markov Decision Process (Dec-POMDP) \cite{oliehoek2016concise}, represented by the tuple $\langle \mathcal{S}, \mathcal{U}, P, \{ \mathrm{rw}_{0}, \dots, \mathrm{rw}_{N} \}, \mathcal{Z}, \gamma \rangle,
$ where \( \gamma \in [0,1) \) is the discount factor. Under the agent policy distribution \( \pi_i(z_i) \), the team performance is defined by the discounted objective function:

\begin{equation}
\label{eq:objective}
    J_i = \mathbb{E}_{s_o \sim P_0, s' \sim P, u \sim \pi} \bigg[ \sum_{t=0}^{\infty} \gamma^t \mathrm{rw}_{i} \bigg]; \quad
    J = \sum_{i=1}^{N} J_i.
\end{equation}

The problem is thus reduced to finding the agent reward \( \mathrm{rw}_i \) and the optimal policy distribution \( \pi_i(z_i) \) that maximize \( J \). Furthermore, the reward \( \mathrm{rw}_i \) must be carefully designed to account for collective achievements, such as demand satisfaction, while also considering agent motion constraints (e.g., avoiding collisions, staying within the environment boundaries, etc.). Note that the reward $\mathrm{rw}_i$ depends on the joint state and actions of all agents, not just on agent $i$'s state and control.


\section{METHOD}
\begin{figure*}[t!]
    \begin{subfigure}{0.33\linewidth}
    \centering
    \includegraphics[width=\linewidth]{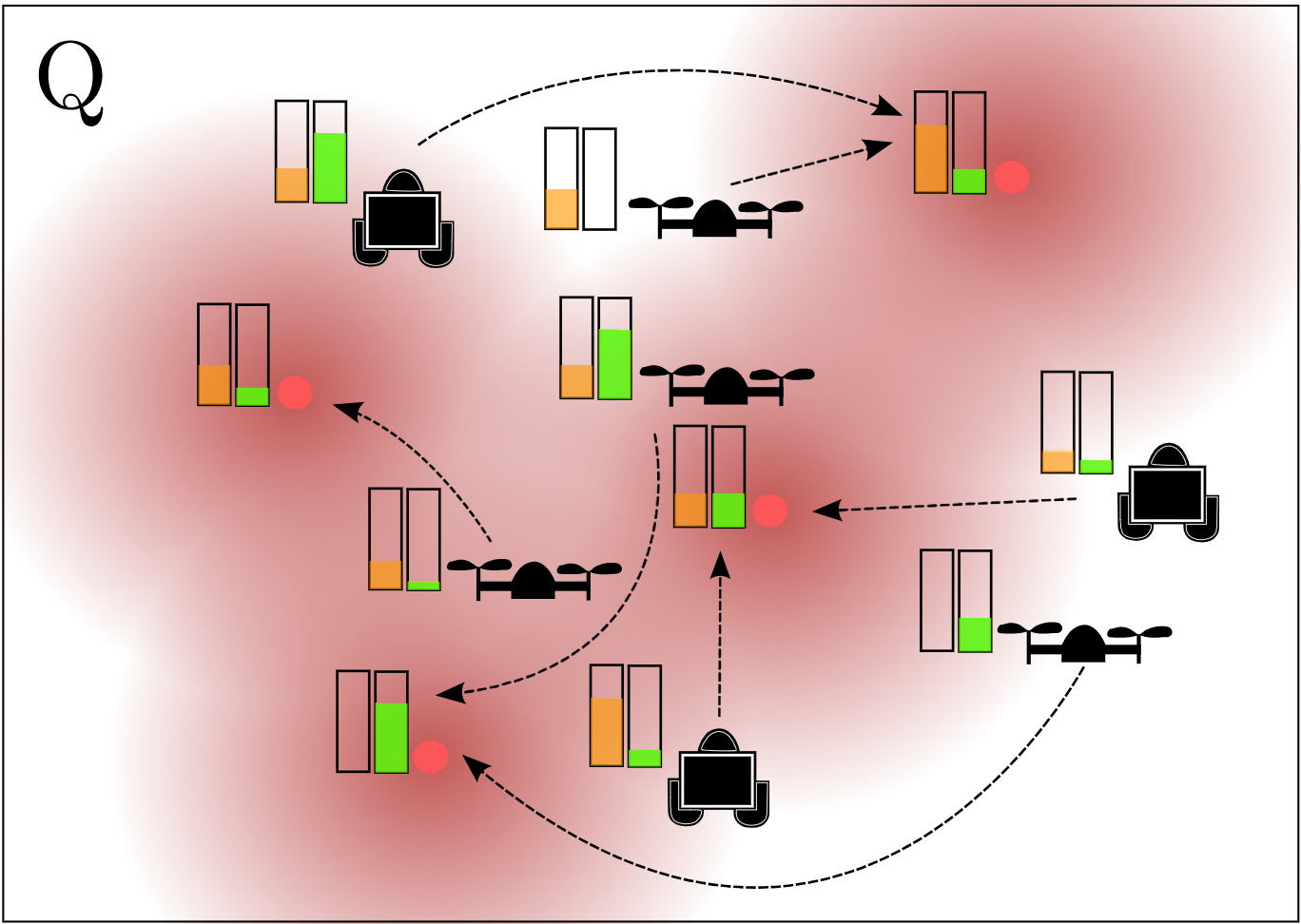}
    \caption{}
    \label{fig:assingment-example}
    \end{subfigure}
    \begin{subfigure}{0.66\linewidth}
    \centering    
    \includegraphics[width=\linewidth]{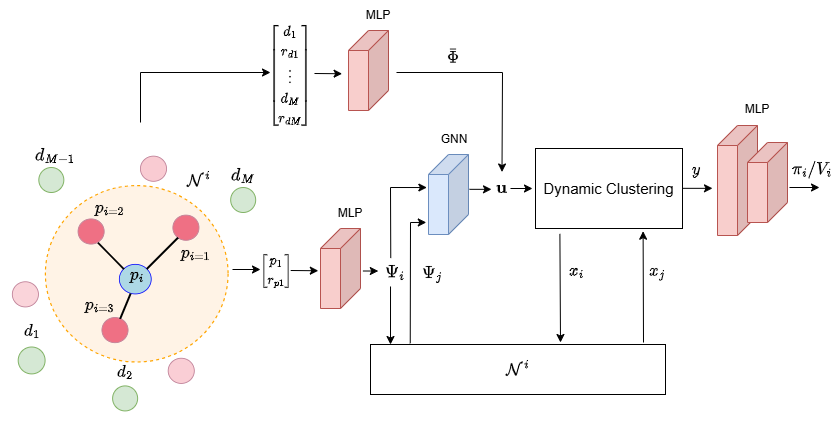}
    \caption{}
    \label{fig:enter-label}
    \end{subfigure}
    \caption{(a) An assignment example involving a group of heterogeneous robots transporting heterogeneous resources shown as bars of different colors. The robots are allocated to optimally fulfill consumer demands located in the red dots. (b) Neural network model architecture for the value and policy estimation}
\end{figure*}

To address the problem described in Section~\ref{sec:prob-statement}, we implement a MARL strategy. Specifically, we consider that agents must collectively supply resources to fully meet consumer demands, as illustrated in Figure~\ref{fig:assingment-example}.

\subsection{Reward shaping}
\label{sec:reward-shaping}
While the literature on fully cooperative games in RL often assumes global rewards equally shared among agents~\cite{gronauer2022multi}, we design a combination of local and global rewards. This task requires a balance between global coordination and localized cooperation among smaller groups of agents contributing to resource delivery for the same consumer. The overall agent reward is the sum of local and global rewards
\begin{equation}
    \mathrm{rw}_i = \mathrm{rw}_{id} + \mathrm{rw}_{im} + \mathrm{rw}_{is} + \mathrm{rw}_{rc} + \sum_{j=1}^{N} \mathrm{rw}_{ij} + \mathrm{rw}_s + \mathrm{rw}_g.
\end{equation}
In the following, we describe each reward element. First, we reward the entire team for reducing the $r$ resources for the $M$ demands at time $t$:

\begin{equation*}
    \mathrm{rw}_g = \sum_{l=1}^{r} \sum_{m=1}^M  \big( \mathrm{de}_{ml}(t-1) - \mathrm{de}_{ml}(t) \big).
\end{equation*}

Where $\mathrm{de}_{ml}$ is $l$-th resource of demanding location $m$. Additionally, agents receive a global reward if there exists at least one agent releasing resources at every consumer location: $\mathrm{rw}_s >0$. For the local reward, we penalize collisions among agents by introducing a negative reward proportional to the squared distance between agents closer than a threshold $\epsilon<R$:

\begin{equation*}
    \begin{aligned}
    \mathrm{rw}_{ij} = -\gamma_{ij} \|p_i - p_j\|^2_2, 
     \quad & \text{with} \quad j \neq i, v_j \in \mathcal{N}_i \\ \gamma_{ij}>0 \quad \text{if} \quad \|p_i - p_j\|^2_2 < \epsilon, \quad &  \gamma_{ij}=0 \quad \text{otherwise} 
    \end{aligned}
\end{equation*}

Agents are rewarded for releasing to the instantaneous depletion resources, with the reward for the agent proportional to the resources released: $\mathrm{rw}_{im} >0$. When the instantaneous consumer demands are fully satisfied, the sub-team contributing to this fulfillment receives an additional reward: $\mathrm{rw}_{rc}>0$. When an agent satisfies a persistent demand, the agent is rewarded with a fixed quantity: $\mathrm{rw}_{is}>0$. We define a reward to guide agent-to-consumer assignments by minimizing demand mismatch \( l_1 = \left\| \mathrm{de} - a^\top \mathrm{so} \right\|_2^2 \), where \( a \in \mathbb{B}^{N \times M} \) is the assignment matrix.To encourage agents clustering and delivering to to proximal resources, we add \mbox{\( l_2 = \| \big( d_{pd} \circ \left( \sum_{l=1}^{r} \mathrm{so}_{l} \otimes \mathbf{1}_M \right) + \mathrm{so} \big( \frac{1}{\mathrm{de}} \big)^\top \big) \circ a\|_1\) }, where \( d_{pd} \in \mathbb{R}^{N \times M} \) is the agent-to-demand distance matrix, \( \frac{1}{\mathrm{de}_m} \) is applied element-wise, and \( \mathbf{1}_M \) is the all-ones vector in \( \mathbb{R}^M \). The optimal \( a \) is found via mixed-integer quadratic programming (MIQP):
\begin{equation}
    \begin{aligned}
         \min\limits_{a} \qquad l_1 + l_2, & \\
         \text{s.t.} \quad \sum_{j=1}^M a_{ij} = 1, \quad \sum_{i=1}^N a_{ij} \geq 1, \quad &
            a \in \{0,1\}^{N \times M}.
    \end{aligned}
    \label{eq:QP}
\end{equation}
The resulting assignments from the optimization ($a_i$) are used to design agent rewards, encouraging agents to align with the correct assignment:
\begin{equation*}
    \mathrm{rw}_{id} = \begin{cases}
        \| p_i(t-1) - d_{a_i} \|_2^2 - \| p_i(t) - d_{a_i} \|_2^2, \hspace{0.2ex} \text{if } p_i(t) \notin R_{a_i}, \\
        \gamma_{id} >0, \qquad \qquad \qquad \qquad \qquad \qquad \text{if } p_i(t) \in R_{a_i}
    \end{cases}
\end{equation*}

Here, $R_{a_i}$ denotes the consumer interaction area. The reward gains are chosen empirically to balance the different contributions. Despite relying on a centralized reward, the policy in Section III.B is fully decentralized, using only local observations and nearby communication. Local rewards, the group reward $\mathrm{rw}_{rc}$, and the accumulated reward $\gamma_{id}$ align agents’ incentives only when targeting the same demanding consumer. CTDE methods like MADDPG~\cite{lowe2017multi} and COMA~\cite{foerster2018counterfactual} are unsuitable here due to difficulties with credit assignment and scalability issues~\cite{lyu2021contrasting}. Similarly, VDN~\cite{sunehag2018value} and QMIX~\cite{rashid2020monotonic} fail to capture subgroup rewards, as shown in Section~\ref{sec:evaluations}. We instead adopt decentralized training and execution (DTDE), with agents estimating individual value functions based on their own rewards. To accelerate training, we use a shared neural network architecture.
\subsection{Neural network architecture}
\label{sec:net-arch}
We deploy two neural networks with the same architecture for the policy and the value function predictions. Given the multi-demand nature of the task, our neural model solution relies upon dynamic cluster consensus that naturally imposes subgroup agreements. Specifically, the demanding consumer features shape the team cluster consensus. In this way, the cluster formation can have a physical meaning for the policy (the agents divide in sub-groups to satisfy the demands) but can also represent the value function clustering, in line with the rewards definition.  The current models~\cite{lian2024distributed, bizyaeva2022nonlinear, zhang2025generalized} developed for dynamic clustering either do not allow the dynamic change of the clustering equilibrium or rely on specific graph topology. Therefore, we design a dynamic model that overcomes these limitations. 

Firstly, each agent processes the resources for each demanding locations $m$ using DeepSets~\cite{zaheer2017deep} as follows: 

\begin{equation}
    \Phi_m = \phi_2\left(\sum_{l = 1}^{r} \phi_1(d_m, \mathrm{de}_{ml},\kappa_{ml})\right).
\end{equation} 
Here, $\phi_1$ and $\phi_2$ are two multi-layer perceptrons (MLPs) that process and combine the consumer's resource information. Similarly, each agent processes its own resources as  $\Psi_i(p_i, \mathrm{so}_{pi},\kappa_{pi})$. The consumer features form the feature vector $\bar{\Phi} = [ \Phi_0, \dots, \Phi_M ]^T \in \mathbb{R}^{M \times G}$, while the agent features constitute the vector $\bar{\Psi} = [ \Psi_0, \dots, \Psi_N ]^T \in \mathbb{R}^{N \times G}$. The agent feature vector $\Psi_i$ is shared among agents using a 1-length graph filter which, together with the consumers features, forms the input $\Delta$ as follows:

\begin{equation}
\Delta = \sum_{k=0}^{1} S^k\bar{\Psi}\hat{B}_k + \sum_{m=1}^{M} \Phi_m  + b_u.
\end{equation}

Where $S \in \mathbb{R}^{N \times N}$ represents the graph's sparsity pattern (e.g., Laplacian or adjacency matrix), $\hat{B}_k, b_u$ are learnable parameters.

\begin{equation}
    \begin{cases}
        \Xi = \text{softmax}\left( \sigma_c \left(\Delta +   \sum\limits_{k=0}^{1} S^kx\hat{A}_k   + b_x \right) \bar{\Phi}^T \right) \\
        f = \rho\left( W \Phi_{\argmax\limits_{m \in \{1, \dots, M \}}\Xi_{m}} + b\right) \\
        \dot{x} = -\left(\tau + f\right) \circ x - S x A + f \circ B_f \\	
        y = [ x,  \Xi \Phi , u, p]
    \end{cases}    .
    \label{eq:model-dynamics}
\end{equation}

Here, $\rho$ denotes the ReLU activation function, while $\sigma_c$ is the Tanh activation function. The team state $x \in \mathcal{X} \subseteq \mathbb{R}^{N \times F}$ has $F$ features. The matrices weights $W, \hat{A}_k, A$ and biases $b, b_x, \tau, B_f$ are learnable parameters. The biases are defined as $\bm{1}_N \otimes \textit{b}$, ensuring uniform biases for all agents. The attention coefficients $\Xi \in \mathbb{R}^{N \times M}$ encode the relative importance of consumer features in guiding agent motion, with the maximum coefficient selecting the dominant consumer features that influence the state $x$. The attention $\Xi$ depends on the current input but also on the clustering states of the entire team. The output $y$ encapsulates the current state, the weighted sum of the consumers' features, input, and agent positions.

We denote the induced infinity norm as $||\cdot||_{\infty}$ and the induced infinite log-norm $\mu_{\infty}(\bm{X}) = \max_{i}(x_{ii}+\sum_{j=1,j\neq 1}^{n}|x_{ij}|)$. In the following, we used the vector operator $\bm{X}_{|}$ that rearranges the elements of matrix $\bm{X}$ in a vector. For the filters in system~\eqref{eq:model-dynamics}, we use the following notation:

\begin{equation}
    S_{I} \triangleq [I, S] \quad
    \hat{A}_{0,K} \triangleq [\hat{A}_0, \hat{A}_1]^T  \quad
    \hat{B}_{0,1} \triangleq [\hat{B}_0, \hat{B}_1 ]^T
    \label{eq:definitions}
\end{equation}

Given the following assumptions

\begin{assumption}
	The bias $B_f$ is unity-bounded: $B_f \subseteq [-1,1]^{N \times F}$ , i.e. $||B_f||_{\infty} \leq 1$,
	\label{assumption1}
\end{assumption}

\begin{assumption}
	Given any two support matrices $||S_{1}(t)||_{\infty}$ and $||S_{2}(t)||_{\infty}$, $\forall t \in \mathbb{R}^{+}$ associated with two different graphs, they are bounded by the same $||\bar{S}||_{\infty}$; moreover, they are lower bounded by $||\tilde{S}||_{\infty}$,
    \label{assumption2}
\end{assumption}
\noindent we provide the following theorem for the neural ODE~\eqref{eq:model-dynamics}.
\begin{theorem}
	\label{diss-lgtc}
	Under Assumption~\ref{assumption1} and ~\ref{assumption2}, with $x(0) \in \mathcal{X}$, system~\eqref{eq:model-dynamics} is infinitesimally contractive and the state is bounded in the range $[-1,1]$, if the following constraints are satisfied
    \begin{equation}
			\tau\geq 0, \quad \mu_{\infty}(A^T \otimes S) \geq 0; \quad x(0) \in \mathcal{X} \subseteq[-1,1]^{N \times G}
    \end{equation}
    \begin{equation}
        \begin{split}
              \text{with the} & \text{ contraction rate} \\ c =& \mu_{\infty}( \diag(\tau_{|}) + (A^T \otimes \tilde{S})+\diag(f_{|}))
        \end{split}
    \end{equation}
\end{theorem}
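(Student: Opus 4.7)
The plan is to recast the matrix-valued ODE in~\eqref{eq:model-dynamics} as a vector-valued ODE on $\mathbb{R}^{NF}$ and then apply the standard infinitesimal-contraction criterion: if $\mu_\infty(\partial \dot x_{|}/\partial x_{|}) \leq -c$ on a forward-invariant set, the flow is contractive with rate $c$ there. The key algebraic identity is $(SxA)_{|} = (A^T \otimes S)\, x_{|}$, which turns the bilinear coupling into a constant Kronecker matrix times $x_{|}$. The two Hadamard pieces $(\tau+f)\circ x$ and $f \circ B_f$ become, respectively, $\diag((\tau+f)_{|})\,x_{|}$ and $(f\circ B_f)_{|}$, so the vectorized dynamics is affine in $x_{|}$ whenever $f$ is held fixed.

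The first step is to observe that $f$ is piecewise constant in $x$. The vector $f=\rho(W\Phi_{m^\star(x)}+b)$ depends on $x$ only through the index $m^\star(x) = \argmax_m \Xi_m(x)$, and on every open region where $m^\star$ is constant, $\Phi_{m^\star}$ does not depend on $x$; since $\rho$ is ReLU, $f_{|}\geq 0$ entrywise on each region. Combined with $\tau\geq 0$ from the hypothesis, the effective diagonal damping $\diag(\tau_{|}+f_{|})$ is then non-negative. Because $\Xi$ depends smoothly on $x$ through the $\sigma_c$--softmax composition, the switching set where $m^\star$ changes has measure zero, so the pointwise contraction estimate derived on each region extends to the whole state space by a standard Filippov/absolutely-continuous-solutions argument on $\|x(t)-x'(t)\|_\infty$.

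Next I would compute and bound the Jacobian. On every region of constant $m^\star$,
\begin{equation*}
    J(x) \;=\; -\diag(\tau_{|}) - \diag(f_{|}) - (A^T \otimes S).
\end{equation*}
Using subadditivity of $\mu_\infty$ and the row-sum formula $\mu_\infty(X)=\max_i(X_{ii}+\sum_{j\neq i}|X_{ij}|)$, the two diagonal pieces contribute $-\min_i(\tau_{|}+f_{|})_i$, while the Kronecker coupling contributes an amount controlled by the hypothesis $\mu_\infty(A^T\otimes S)\geq 0$, which is exactly what prevents the coupling from spoiling contraction. Invoking Assumption~\ref{assumption2} to replace the time-varying $S$ by the uniform lower bound $\tilde S$ yields a bound valid along any admissible graph trajectory, and matches the stated rate $c = \mu_\infty(\diag(\tau_{|}) + (A^T \otimes \tilde S) + \diag(f_{|}))$ after accounting for signs.

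Finally, to show forward-invariance of $\mathcal{X}\subseteq[-1,1]^{N\times G}$, I would run a Nagumo-type boundary argument componentwise. At a point with $x_{ig}=1$, the diagonal damping contributes $-(\tau_{ig}+f_{ig})$, the forcing obeys $(f\circ B_f)_{ig}\leq f_{ig}$ by Assumption~\ref{assumption1} (since $\|B_f\|_\infty\leq 1$), and the coupling $-(SxA)_{ig}$ has controlled sign by the log-norm hypothesis on $A^T\otimes S$; summing gives $\dot x_{ig}\leq 0$. The symmetric inequality at $x_{ig}=-1$ follows identically, which yields invariance of $[-1,1]^{N\times G}$ and therefore boundedness. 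The main obstacle I expect is the Kronecker coupling $A^T\otimes S$: it is the only non-diagonal piece of the Jacobian, so both the log-norm estimate and the boundary sign argument hinge on its sign structure being absorbed into the diagonal damping. The assumptions $\tau\geq 0$ and $\mu_\infty(A^T\otimes S)\geq 0$ are tailored to exactly this absorption, so after setting things up the argument should reduce to careful bookkeeping rather than a fundamentally new inequality.
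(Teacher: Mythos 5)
The paper never actually proves Theorem~\ref{diss-lgtc}: the proof is deferred wholesale to \cite{marino2024lgtc}, so there is no in-paper argument to compare against. Your skeleton --- vectorize via $(SxA)_{|}=(A^T\otimes S)\,x_{|}$, observe that $f$ is piecewise constant in $x$ because it enters only through the argmax over $\Xi$ (and is entrywise nonnegative by the ReLU), bound $\mu_\infty$ of the resulting affine Jacobian region by region, and close with a Nagumo boundary argument for invariance of $[-1,1]^{N\times G}$ under $\|B_f\|_\infty\le 1$ --- is the standard route for contraction results on liquid-time-constant-type networks and is almost certainly the intended one.

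However, the step you defer to ``careful bookkeeping'' is precisely where the argument is incomplete. On each region the Jacobian is $J=-\diag(\tau_{|}+f_{|})-(A^T\otimes S)$, and contraction requires an \emph{upper} bound $\mu_\infty(J)\le -c<0$. Subadditivity gives $\mu_\infty(J)\le-\min_i(\tau_{|}+f_{|})_i+\mu_\infty\bigl(-(A^T\otimes S)\bigr)$, and the hypothesis $\mu_\infty(A^T\otimes S)\ge 0$ says nothing useful about $\mu_\infty\bigl(-(A^T\otimes S)\bigr)$: in general $\mu_\infty(-X)\ne-\mu_\infty(X)$, and $\mu_\infty(X)\ge 0$ only yields a \emph{lower} bound on $\mu_\infty(-X)$. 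The same sign issue arises in matching the stated rate $c=\mu_\infty(\diag(\tau_{|})+(A^T\otimes\tilde S)+\diag(f_{|}))$, since $-\mu_\infty(-M)=\min_i\bigl(m_{ii}-\sum_{j\ne i}|m_{ij}|\bigr)$ differs from $\mu_\infty(M)$ by twice the off-diagonal row sums. Closing this gap requires the structural facts recorded only in the paper's Remark (replacing $A$ by $A^TA$ and taking $S$ to be a left-normalized adjacency, hence a nonnegative matrix with unit row sums), not merely the two scalar inequalities in the theorem statement; your boundary argument likewise asserts rather than derives that the coupling ``has controlled sign.'' A second, smaller gap: because $f$ jumps when the argmax switches, the vector field is \emph{discontinuous} across the switching surfaces, so the ``standard Filippov/measure-zero'' extension of the per-region estimate is not automatic --- for discontinuous fields one must verify the one-sided log-norm condition for the Filippov set-valued map, or restrict the claim of infinitesimal contraction to the complement of the switching set.
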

\begin{proof}
    The proof is available in~\cite{marino2024lgtc}.
\end{proof}
\begin{remark}
In practice, we replace $A$ with $A^TA$ and used the left normalized adjacency matrix for $S$ to guarantee the condition $ \mu_{\infty}(A^T \otimes S) \geq 0$ and  easily define $||\bar{S}||_{\infty}$,$||\tilde{S}||_{\infty}$
\end{remark}
To analyze the system \eqref{eq:model-dynamics},  we provide the following definition on the clustering of the state trajectories.
\begin{definition}
\label{def:cluster-trajectories}
    Given $N$ agent state trajectories $\bar{x}^c = [x^c_1, \dots, x^c_N]$, agents are grouped into clusters if they follow the same time-evolving state. Formally, agents $i$ and $j$ belong to the same cluster $C_m$ if and only if $ x^c_i = x^c_j, \quad \forall i, j \in C_m, \quad m \in \{1, \dots, M\}.$ That is, the cluster set is defined as $ C_m = \left\{ i,j \in \mathcal{V} \;\middle|\; x^c_i = x^c_j \right\}. $ The clustered team state can then be represented as $ \bar{x}^c = [x_{c_1} \otimes \bm{1}_{C_1}, \dots, x_{c_M} \otimes \bm{1}_{C_M} ]^T $ where $x_{c_m}$ is the representative state trajectory for cluster $C_m$.
\end{definition} 
We also provide an assumption on the structure of $S$.
\begin{assumption}
\label{assumption3}
    The graph representation matrix, $S$, has constant row sum in the block matrices containing clustering agents.
\end{assumption}
\begin{remark}
    Assumption \ref{assumption3} dictates a specific graph topology for the clusters. This is naturally satisfied when the similarity of $f_i$
  between connected agents defines the graph link weights. Moreover, in practice, even if this condition is relaxed, the agents will still converge to their respective clusters. However, the equilibrium states within each cluster will generally differ, with the differences being bounded and proportional to the variations in row sums among agents within the same cluster. 
\end{remark}
We state our main result.
\begin{theorem}
\label{theorem-2}
The team state trajectories $x$ of the distributed system in \eqref{eq:model-dynamics} converge exponentially to $M$ cluster trajectory states as given by Definition~\ref{def:cluster-trajectories}.
\end{theorem}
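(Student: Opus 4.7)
The plan is to combine the contraction result of Theorem~\ref{diss-lgtc} with an invariance argument for the clustered subspace. The key observation is that the nonlinearity $f = \rho(W\Phi_{\argmax_m \Xi_m} + b)$ assigns to each agent $i$ a single consumer index $m_i^\star = \argmax_m \Xi_{im}$, which partitions $\mathcal{V}$ into at most $M$ sets $C_m = \{ i : m_i^\star = m \}$ on which $f_i$ is cluster-constant. I would prove the theorem in two steps: (i) the clustered manifold $\mathcal{M}_c$ consisting of states $\bar{x}^c$ of the form in Definition~\ref{def:cluster-trajectories} is positively invariant for the ODE in \eqref{eq:model-dynamics}; and (ii) by Theorem~\ref{diss-lgtc}, every trajectory of \eqref{eq:model-dynamics} converges exponentially to a trajectory in $\mathcal{M}_c$.

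For invariance (step (i)), write $\bar{x}^c$ up to row permutation as blocks with rows constant within each cluster, and show that each term on the right-hand side of the $\dot{x}$ equation in \eqref{eq:model-dynamics} maps $\mathcal{M}_c$ to itself. The bias $\tau = \bm{1}_N \otimes b$ and the bias structure of $B_f$ are broadcast across agents; together with the cluster-constancy of $f$ noted above, the terms $-(\tau+f)\circ x$ and $f \circ B_f$ are cluster-constant. For the graph term $-SxA$, the $i$-th row of $Sx$ equals $\sum_m \bigl( \sum_{k \in C_m} S_{ik} \bigr) x_{c_m}$; Assumption~\ref{assumption3} (constant block row-sum) makes $\sum_{k \in C_m} S_{ik}$ depend only on the cluster containing $i$, so the row is cluster-constant, and post-multiplication by $A$ preserves this structure.

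For convergence (step (ii)), a trajectory initialized in $\mathcal{M}_c$ remains there by step (i) and is therefore a genuine solution of \eqref{eq:model-dynamics}. Theorem~\ref{diss-lgtc} provides that \eqref{eq:model-dynamics} is infinitesimally contractive in $\|\cdot\|_\infty$ with rate $c$. Contraction implies that any two solutions of the system converge to each other exponentially at rate $c$; in particular, an arbitrary trajectory converges exponentially to the clustered trajectory initialized at the projection of its starting point onto $\mathcal{M}_c$, yielding the claimed exponential convergence to $M$ cluster trajectory states.

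The principal obstacle is that the $\argmax$ appearing in $f$ is discontinuous, so along the transient the labels $m_i^\star$ may switch before the partition stabilizes, which briefly violates strict invariance of $\mathcal{M}_c$. I would handle this by a piecewise analysis over maximal intervals of constant $\argmax$: on each such interval the invariance plus contraction argument above applies and contracts the distance to the corresponding clustered solution at rate $c$; once $\Xi$ separates beyond a margin, the label assignment stabilizes and the global exponential bound follows. A secondary delicate point is Assumption~\ref{assumption3}; per the accompanying remark, one can either enforce the constant block row-sum exactly (e.g.\ by weighting edges through similarity of $f_i$) or accept bounded residual differences between intra-cluster equilibria without affecting the exponential rate.
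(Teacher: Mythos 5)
Your proposal is correct and follows essentially the same route as the paper: show that agents sharing the same consumer-selected dynamics $f_m$ have identical vector fields (the graph term being cluster-constant by Assumption~\ref{assumption3}), so the clustered trajectory is a genuine solution, and then invoke the contraction property of Theorem~\ref{diss-lgtc} to obtain exponential convergence to it. The only differences are matters of emphasis --- you explicitly treat the $\argmax$ switching, which the paper glosses over, while the paper instead states the convergence as a robust-contraction bound with residual terms proportional to $\sup\|f-\bar{f}_M\|_\infty$ and $\sup\|S-S_M\|_\infty$, which absorbs both the transient label switching and any violation of the exact constant-row-sum condition.
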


\begin{proof}
To prove the statement of the theorem, we need to show that the agents have the same particular state trajectories if they select the same dynamics (therefore if they belong to the same sub-team) and that this particular trajectory is a global attractor for the agents in the sub-team. \\  
Let us define the $f_{mi}$ dynamics corresponding to the $m$-th consumer selected by the agent $i$ as 
\begin{equation*}
f_{mi} = \rho(W \Phi_m + b).
\end{equation*}
Vector $\bar{f}_M = 1_N \otimes [f_{m1}, \dots, f_{mN}]^T$ consists of the consumer resource dynamics. Therefore the equilibrium trajectories are defined by:

\begin{equation}
    \dot{x}^c_i = - \tau \circ x^c_{i} + f_{mi} \circ (B_f-x^c_{i}) - S_i\bar{x}^cA
\end{equation}
with $S_i$ being the $i$-th row of $S$. The ReLU $= max(\cdot, 0)$ is analyzed in the two cases of the max operator for each element $q$ of the state $x^c_{i}$. Therefore, there exist two cases:
\begin{equation*}
\begin{cases}
    \begin{aligned}
        \dot{x}^c_{iq} &= -\tau_q x^c_i+f_{miq}(B_{fq}-x^c_{iq}) - S_i\bar{x}^c_qA_q\\
        \dot{x}^c_{iq} &= -\tau_q x^c_i-S_i\bar{x}^c_qA_q
    \end{aligned}
\end{cases}.
\end{equation*}
By selecting agent $j$ in the same cluster of $i$, the difference of their dynamics ($\dot{x}^c_{iq} - \dot{x}^c_{jq}$) is equal to
\begin{equation}
\label{eq:diff_agents}
\dot{x}^c_{iq} - \dot{x}^c_{jq} = S_j\bar{x}^c_qA_q - S_i\bar{x}^c_qA_q
\end{equation}
because they share the same dynamics $f_m$. The difference in eq.~\eqref{eq:diff_agents} is equal to zero only if $ S_j\bar{x}^c_qA_q = S_i\bar{x}^c_qA_q$. This condition under the assumption~\ref{assumption1} as demonstrated in~\cite{xia2011clustering}. Given the Lipschitz constant $L$ and $L_{S}$ of the state transition dynamics with respect to $f$ and the graph topology, the state trajectories converge exponentially to the trajectory $\bar{x}^c$ driven by the demanding dynamics $\bar{f}_M$ and $S_{M}$ graph topology satisfying the constant row sum condition, as the system is contracting, that is,
\begin{equation*}
\begin{aligned}
        || x(t) - \bar{x}^c(t) ||_{\infty} &< e^{-ct} ||x(0) - \bar{x}^c(0)  ||_{\infty} \\ & + \frac{L}{c}(1-e^{-ct})\sup_{\tau \in [0,t]}||f(\tau) - \bar{f}_{M}(\tau)||_{\infty}
        \\ &+ \frac{L_S}{c}(1-e^{-ct})\sup_{\tau \in [0,t]}|| S(\tau) - S_M(\tau) ||_{\infty}
\end{aligned}
\end{equation*}
with $c$ being the contraction rate defined by the Theorem \ref{diss-lgtc}.
\end{proof} 
A final MLP is used at the end of the architecture to process the system's output, $y$. 

\subsection{Policy Training}

The trainable parameters for the policy and value functions are denoted as \( \theta \) and \( \phi \), respectively. The agent's policy distribution is modeled as a normal distribution \( \pi(\theta) = \mathcal{N}(\mu(\theta), \sigma(\theta)) \), where the mean \( \mu(\theta) \) and standard deviation \( \sigma(\theta) \) are generated by the proposed architecture.

An independent Proximal Policy Optimization (IPPO) strategy is employed to learn the decentralized policy and value function for each agent. The policy and value losses are defined as follows:

\begin{equation}
    J_{\pi_i}(\theta, \theta_k) =  \min\left(
    \frac{\pi_{i}(z_i, \theta)}{\pi_{i}(z_i, \theta_k)}  A_i^{\pi_i(\cdot, \theta_k)}, \;\;
    g(\epsilon, A_i^{\pi_i(\cdot, \theta_k)})
    \right)
\end{equation}

\begin{equation}
    J_{V_i}(\phi) =  \left(V_i(\phi) - \left(A_i^{\pi_i(\cdot, \theta_k)} + V_i(\phi_k)\right)\right)^2
\end{equation}

Here, \( \epsilon > 0 \) is a small constant to promote safe exploration, and \( A_i^{\pi_i(\cdot, \theta_k)} \) is the advantage function, estimated using Generalized Advantage Estimation (GAE) \cite{schulman2015high}. To satisfy the contractivity conditions outlined in Theorem~\ref{diss-lgtc}, we added two regularization terms, \( \Pi_V(\phi) \) and \( \Pi_\pi(\theta) \), added to the value and policy losses, respectively, as follows:

\begin{equation}
    \begin{aligned}
    J_V &= \mathbb{E}_{ (a,z) \sim \mathcal{D}} \Bigg[ \sum_{i=1}^{N} J_{V_i} + \alpha \Pi_V \Bigg] \\ J_\pi &= \mathbb{E}_{ (a,z) \sim \mathcal{D}} \Bigg[ \sum_{i=1}^{N} J_{\pi_i} + \alpha \Pi_\pi \Bigg]
    \end{aligned}
\end{equation}

Where the team joint actions and the agents' observations $(a,z)$ are randomly sampled from the replay buffer $\mathcal{D}$ and ${\alpha>0}$ is a coefficient to softly enforce the contractivity constraints. The regularization terms are defined as:
\begin{equation}
    \Pi = \text{softplus}(c) + \text{softplus}(\tau).
\end{equation}

\section{EVALUATIONS}
\label{sec:evaluations}

\begin{figure}[t]
    \centering
    \includegraphics[width=1.\linewidth, height=0.45\linewidth]{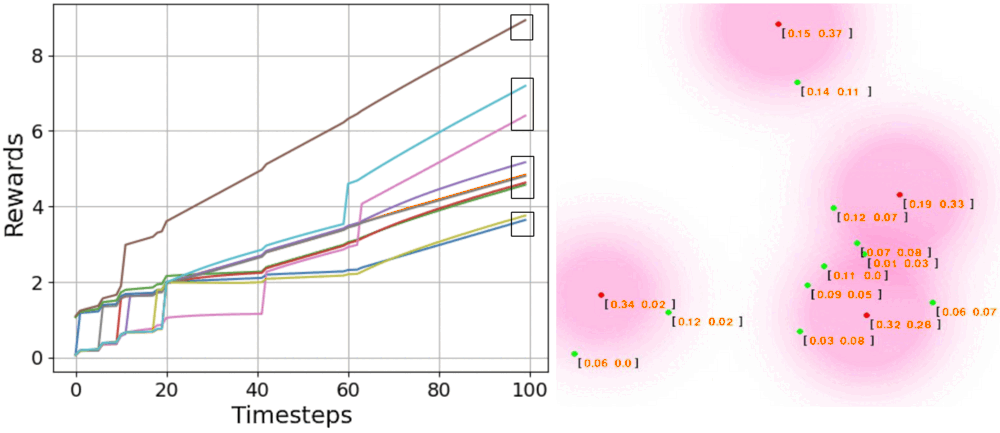}
    \caption{The agents' rewards cluster as the agents (green dots) spatially cluster to satisfy the multi-resource demand (red dots). The pink circle represents the interaction zone weighted by the demand intensity}
    \label{fig:reward_clustering}
\end{figure}

\begin{figure}[t]
    \centering
    \includegraphics[width=\linewidth]{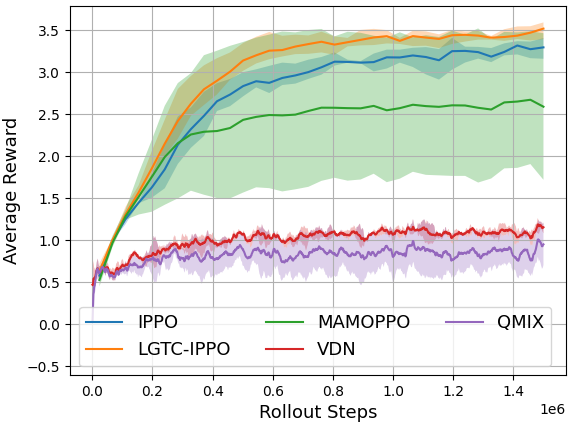}
    \caption{Mean (bold lines) and standard deviation (shaded area) of accumulated rewards over four training runs with random seeds, reported for our approach and four state-of-the-art MARL methods.}
    \label{fig:algorithms-comparison}
\end{figure}
\begin{figure}[t]
    \centering    \includegraphics[width=\linewidth, height=0.43\linewidth]{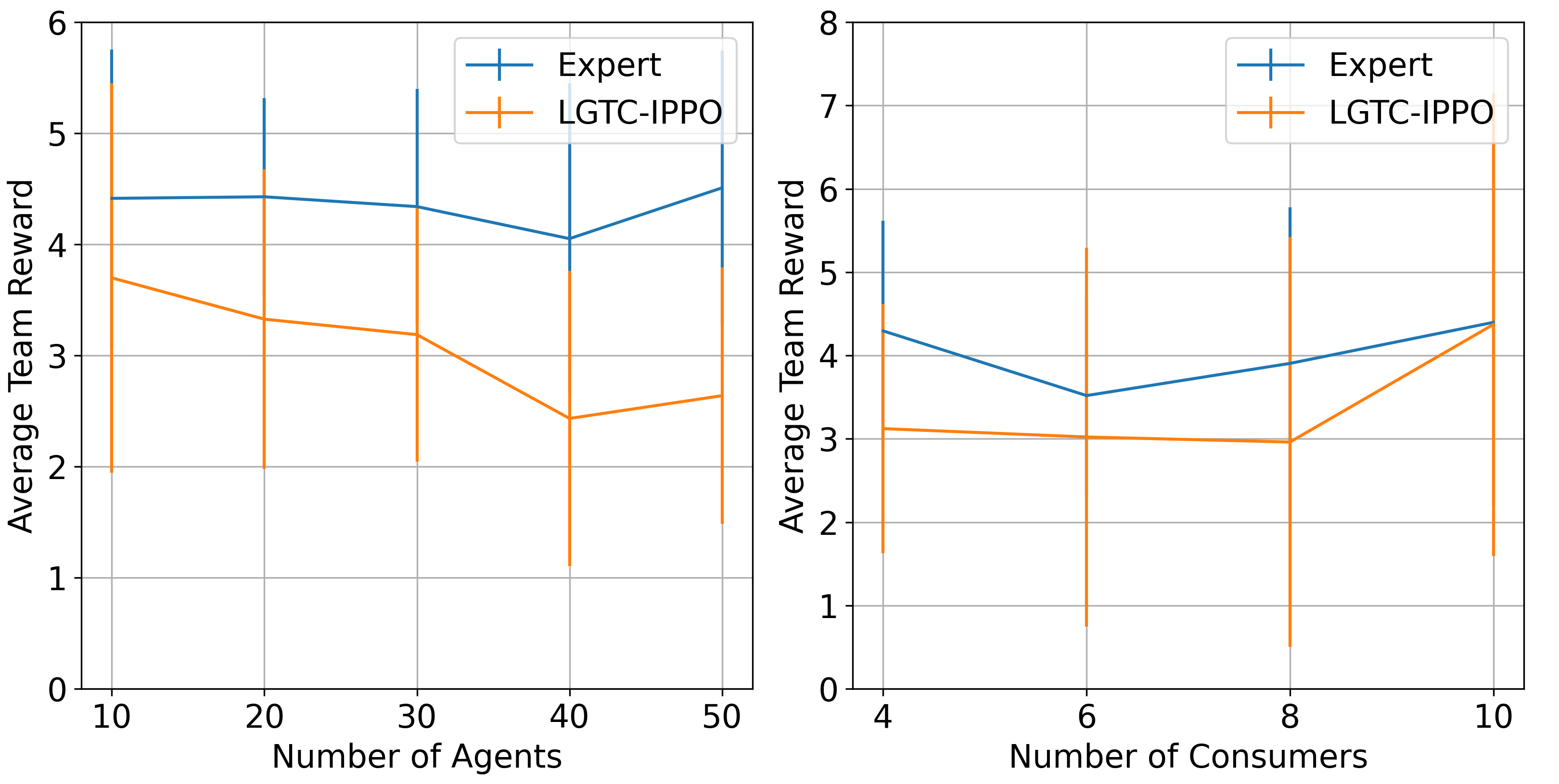}
    \caption{Mean and standard deviation of accumulated rewards over a variable number of agents and consumers, reported for a centralized expert and our approach (LGTC-IPPO).}
    \label{fig:reward_agents}
\end{figure}

\begin{table}[t]
\begin{tabular}{lll}
Parameters                      & MOMAPPO/IPPO/LGTC-IPPO & VDN/QMIX \\ \hline
$\gamma$                        &  0.99                  &  0.99        \\
$\lambda_{GAE}$                 &  0.95                  & -        \\
batch size                      &  64                    & 256         \\
epochs                          &  4                     & 1           \\
rollout/Replay buffer           &  6144                  & 20000         \\
entropy coefficient             &  -                     & 0.01         \\
$\epsilon$                        &  0.2                   & -         \\
max gradient norm               &  0.5                  & 0.5           \\
polyak                     &  -                    & 0.005        \\
G                               &  64                    & 64         \\
F                                &  64                      & 64        \\
final MLP                       &  [256,256]             & [256,256]        \\
MIX                             &  -                    & [32,32]        \\
adam learning rate              &  \num{2e-4}                  & \num{2e-4}           
\end{tabular}
\caption{Training parameters}
\label{tab:training_parameters}
\end{table}

We trained the proposed method in a simulated environment with $10$ agents and $4$ consumer locations, each requiring $2$ resources of different types. The agents are modeled as single integrators with velocity inputs. The resource types, the resource quantities and the agent's initial locations were randomized at each rollout reset. The PyTorch implementation of our algorithm is available at the following link\footnote{https://gitlab.inria.fr/amarino/multi-robot-cluster-assignment}. All training and evaluations were conducted on a machine running Ubuntu 22.04. with Intel Core i7-9750H @ 2.60GHz CPU, Nvidia RTX 2080Ti and 32G RAM. The agents are modeled as single integrator dynamics running at $20Hz$. At the same rate, the agents communicate with their neighbours and generate actions from the learned policy. We normalized the agent positions between $[-1,1]$ and considered a communication range of $C=1.$ in the normalized space.

We want to demonstrate the existence of reward clusters that arise from correct robots' assignments. Figure~\ref{fig:reward_clustering} presents an assignment case along with the agent's reward, defined as in Section~\ref{sec:reward-shaping}. As shown, the rewards cluster into similar values for different groups that emerge to complete the task. 

In the following, we present the results using the cumulative reward. This choice allows us to report the result in a compact way as the reward function aggregates multiple performance indexes e.g., the collision avoidance and the quantity of demand satisfied.
We compared our method with Value-Decomposition Networks (VDN) and QMIX to analyze the performance of a decentralized value function versus a global value function. Since the original implementations of VDN and QMIX were designed for discrete action spaces, we adapted these methods by incorporating a soft actor-critic (SAC)~\cite{haarnoja2018soft} strategy. SAC is an off-policy algorithm that uses a soft $Q$ function to maximize policy entropy, aligning with the off-policy nature of VDN and QMIX.

Specifically, denoting the soft Q function for the $i$-th agent as $sQ_i$, the soft Q function updates for VDN and QMIX are computed as follows:

\begin{equation*}
    \begin{aligned}
        J_{QVDN}(\phi) & = \mathbb{E}_{ (a,z) \sim \mathcal{D}} \Big[ \Big( \sum_{i=1}^{N} Q_i(z_i, a_i, \phi) - \mathrm{rw}(s, a) \\
        & \quad - \gamma \mathbb{E}_{ z_i' \sim s_i'} \big[ \mathbb{E}_{ a_i' \sim \pi(\cdot, \theta)} \big[ \sum_{i=1}^{N} sQ_i(z_i', a_i', \phi) \big] \big] \Big)^2 \Big], \\
        J_{QMIX}(\phi) & = \mathbb{E}_{ (a,z) \sim \mathcal{D}} \Big[ \Big( \text{MIX}(Q_i(z_i, a_i, \phi)) - \mathrm{rw}(s, a) \\
        & \quad - \gamma \mathbb{E}_{ z_i' \sim s'} \big[ \mathbb{E}_{ a_i' \sim \pi(\cdot, \theta)} \big[ \text{MIX}(sQ_i(z_i', a_i', \phi)) \big] \big] \Big)^2 \Big].
    \end{aligned}
\end{equation*}

Here, MIX represents a neural network that combines the individual agent Q functions into a global Q function.

We also benchmark our method against a vanilla IPPO variant, which excludes both the cluster consensus mechanism and loss regularization. For these three methods, we used a policy neural network architecture similar to the proposed method but excluded the dynamic cluster consensus dynamics. Instead, we retained only the consumer feature selection, input features, and agent positions, represented as:
\begin{equation*}
    y = \big[ (\Xi \circ \Phi)_{\argmax_{m \in \{1, \dots, M \}} \Xi_{im}}, \Delta, p \big].
\end{equation*}

However, for these algorithms, the attention $\Xi$ depends on the current input.Additionally, we compared our approach with Multi-Agent Multi-Objective PPO (MAMOPPO)\cite{felten2024momaland}, a leading method for addressing multi-objective multi-agent problems. MAMOPPO employs a weighted sum of resource features to predict both the value function and the policy. In our comparison, we adapted our architecture to predict a vector-valued function, where each element corresponds to the expected value associated with a specific demand location. The weighted sum of this vector is then trained to approximate the expected scalar reward defined in Section \ref{sec:reward-shaping}. The Table~\ref{tab:training_parameters} summarizes the training parameters.

Figure~\ref{fig:algorithms-comparison} compares the performance of the different algorithms across 10 random seeds. We named our solution LGTC-IPPO. Our solution achieves the highest mean reward with a low standard deviation of approximately 0.2 by the end of training. As expected, VDN and QMIX quickly stop improving due to the local nature of the rewards. MAMOPPO and IPPO perform closer to our approach, though MAMOPPO exhibits high variance. This is due to the weighted average of consumer features which can sometimes provide a good estimate of the final rewards but fails in the general case. IPPO, which shares the same neural network as our method, performs worse due to its lack of coordination in consumer selection, as it does not incorporate cluster consensus. 

Additionally, we compare our solution with an expert centralized approach while varying the number of agents and resources. We made 10 trials for each case. The centralized controller employs the optimization in Eq.\eqref{eq:QP} to assign the optimal resource to each agent and uses a proportional controller with centralized obstacle avoidance to guide agents toward their assigned locations while preventing collisions. Figure~\ref{fig:reward_agents} presents the average team reward for our proposed algorithm (LGTC-IPPO) compared to an expert centralized solution, as the number of agents (left) and the number of consumers (right) vary. The expert consistently outperforms LGTC-IPPO, maintaining a higher reward across different scenarios, as expected, since it uses perfect information about the environment. As the number of agents increases, both methods show a slight decline in performance, but LGTC-IPPO experiences a more pronounced drop. While the expert remains relatively stable around $4–4.5$, LGTC-IPPO's reward decreases significantly, falling below $3$ when the number of agents exceeds $30$. This suggests that handling conflicts between resource allocation and collision avoidance becomes increasingly challenging for the decentralized approach in crowded environments. Since these two objectives often conflict, solutions that prioritize one over the other can lead to local minima and inefficient interactions. Training a policy on larger teams could help mitigate this issue. On the other hand,  varying the number of consumers, the expert maintains a relatively stable reward, while LGTC-IPPO remains consistently lower but follows a similar pattern. Interestingly, for a high number of consumers ($10$), LGTC-IPPO reaches the expert’s performance, albeit with high variance. This suggests that when more consumers are available, the decentralized approach has more opportunities to make effective decisions, reducing the negative impact of local information limitations.   

\section{EXPERIMENTS}
\begin{figure*}[t]
    
    \begin{subfigure}[b]{0.33\textwidth}
    \includegraphics[width=\textwidth,height=0.17\textheight]{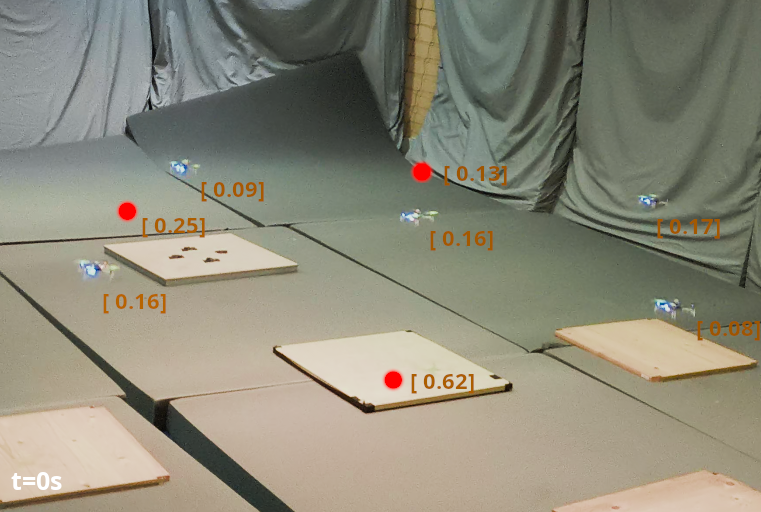}
    \end{subfigure}
    \begin{subfigure}[b]{0.33\textwidth}
        \includegraphics[width=\textwidth, height=0.17\textheight]{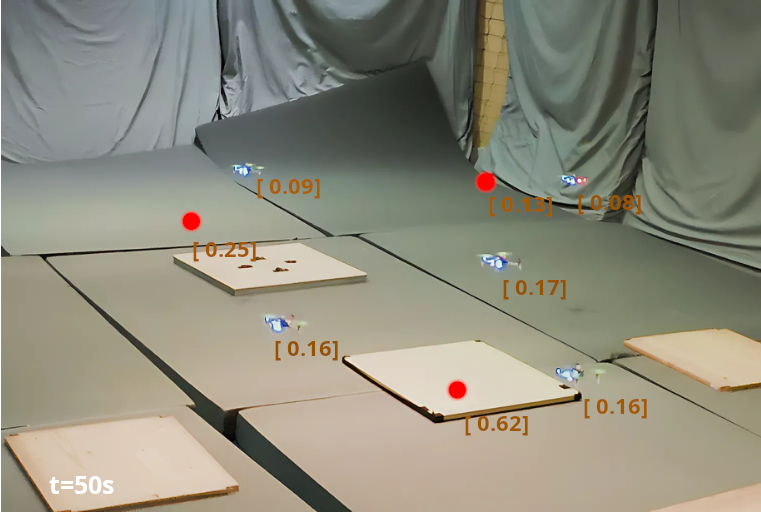}
    \end{subfigure}
    \begin{subfigure}[b]{0.33\textwidth}
        \includegraphics[width=\textwidth, height=0.17\textheight]{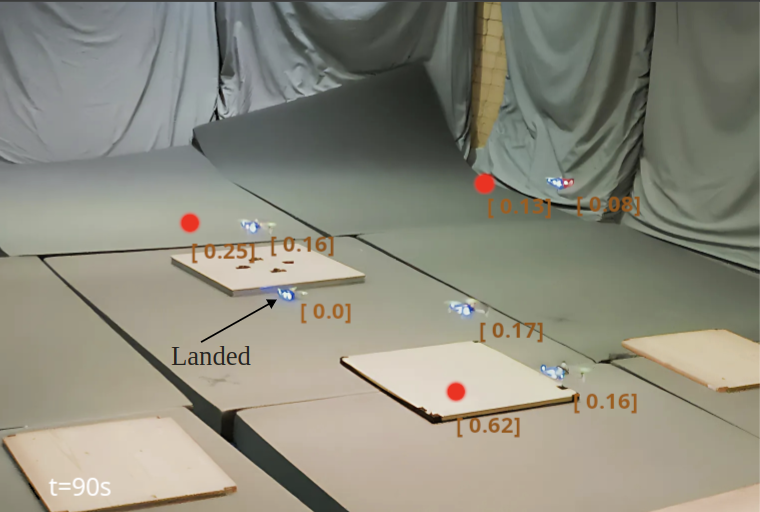}
    \end{subfigure}
    \caption{Dynamic agent reallocation with discharging resources. $5$ drones relocate in the space to interact with $3$ task locations (red dots). The battery level required by each task and available on each drone is depicted in orange. At $50$~s, one drone discharges and lands on the ground, therefore a drone of another subteam relocates to the task left uncovered. }
    \label{fig:discharge-scenarios}   
\end{figure*}
\begin{figure}
    \centering    \includegraphics[width=0.9\linewidth]{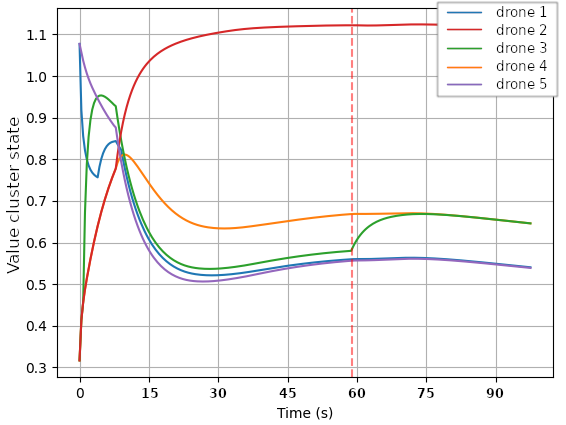}
    \caption{Dynamic state cluster consensus for the resource discharging case. At the time $50$~s, the resource available to drone~$3$ discharges and drone~$4$ changes cluster to cover the consumer unsatisfied.}
    \label{fig:dynamic_state_clustering}
\end{figure}
We evaluated the policy using $5$ Crazyflie drones, each loaded with up to $3$ resources of the two different types described in the Section II (persistent and instantaneous). The policy was executed on a central workstation, which transmitted velocity commands to the drones via a radio dongle. The hardware implementation also incorporated a local Control Barrier Function (CBF) safety filter. The CBF tracks the generated velocity command and adjusts it as necessary to consider a safety distance among the drones. This helps in mitigating the impact of the airflow disturbances caused by other drones on the dynamics of an individual drone. Videos of the experiments are provided in the supplementary materials. Resources requiring instantaneous depletion are colored in green, while persistent resources are displayed in orange. The physical experiments validated the resource assignment performance observed in the previous section.

To further analyze the system, we focused on a scenario where each drone was assigned a single 
persistent resource. This resource represented a dischargeable entity critical to the drone's operation, such as battery level. This setup allowed us to study the equilibrium changes in the dynamic cluster consensus process. Key moments of this scenario are shown in Figure~\ref{fig:discharge-scenarios}. Initially, after takeoff, the drones allocate their resources to target locations to minimize the remaining demand. At $55$~s, we manually reduce the resource level of one drone to simulate the battery discharge process. This causes the drone to land, triggering a reallocation process. To compensate for the absence of the landed drone, another drone is assigned to a new target location, completing the new allocation by approximately $90$~s. 
The value function dynamic states of the $5$ drones, shown in Figure~\ref{fig:dynamic_state_clustering} as a normalized projection on the unitary vector, illustrate the system's response. When the resource (battery) of one drone was depleted, a new set of clustering equilibria emerged, forming clusters that corresponded to the updated resource allocation solution. 
\section{CONCLUSION} 
In this work, we presented LGTC-IPPO, a decentralized reinforcement learning approach for multi-agent multi-resource allocation, incorporating cluster consensus to enhance coordination. By extending IPPO with localized consensus mechanisms, our method improves task allocation efficiency while effectively handling the challenges posed by group rewards. Through extensive evaluations, we demonstrated that LGTC-IPPO outperforms standard MARL baselines in terms of reward stability and coordination. Additionally, our results show that clustering equilibria facilitate agent reallocation when required by changes in consumer demand or team dynamics. Overall, the results show that while centralized approaches still have an advantage in scenarios where full information is available, LGTC-IPPO offers an alternative for environments where such global coordination is either infeasible or too costly.

For the future, we plan to refine the theoretical aspects of dynamic cluster consensus, focusing on its convergence properties in the face of variable communication topologies and stochastic disturbances. Moreover, exploring structured communication protocols and hierarchical planning strategies could further enhance the scalability of our framework. Ultimately, our goal is to extend the applicability of LGTC-IPPO to more complex, real-world scenarios, advancing the state of the art in decentralized multi-agent coordination.

\bibliographystyle{IEEEtran} 
\bibliography{IEEEabrv,bibliography}

\end{document}